\documentclass{article}
\usepackage{ijcai25}
\usepackage{xcolor}

\usepackage{tikz}
\usepackage{pgfplots}
\usepgfplotslibrary{fillbetween}
\pgfplotsset{compat=1.18}

\usepackage{times}
\usepackage{soul}
\usepackage{url}
\usepackage[hidelinks]{hyperref}
\usepackage[utf8]{inputenc}
\usepackage[small]{caption}
\usepackage{graphicx}
\usepackage{amsmath}
\usepackage{amsthm}
\usepackage{booktabs}
\usepackage{algorithm}
\usepackage{algpseudocode}
\usepackage[switch]{lineno}
\usepackage{amsmath, amssymb, amsthm}
\usepackage{cleveref}

\newcommand{\indicator}[1]{\mathbf{1}\left\{#1\right\}}

\newtheorem{theorem}{Theorem}[section]
\newtheorem{lemma}[theorem]{Lemma}
\newtheorem{definition}[theorem]{Definition}
\newtheorem{proposition}[theorem]{Proposition}

\title{Probably Approximately Consensus: \\ 
On the Learning Theory of Finding Common Ground}

\author{
Carter Blair$^{1\dagger}$
\and
Ben Armstrong$^2$\and
Shiri Alouf-Heffetz$^3$\and
Nimrod Talmon$^3$\and
Davide Grossi$^4$
\affiliations
$^1$University of Waterloo, $^2$Tulane University, $^3$Ben Gurion University, $^4$University of Groningen\\
\emails
$^\dagger$carter.blair@uwaterloo.ca
}

\begin{document}

\maketitle

\begin{abstract}
A primary goal of online deliberation platforms is to identify ideas that are broadly agreeable to a community of users through their expressed preferences. Yet, consensus elicitation should ideally extend beyond the specific statements provided by users and should incorporate the relative salience of particular topics. We address this issue by modelling consensus as an interval in a one-dimensional opinion space derived from potentially high-dimensional data via embedding and dimensionality reduction. We define an objective that maximizes expected agreement within a hypothesis interval where the expectation is over an underlying distribution of issues, implicitly taking into account their salience. We propose an efficient Empirical Risk Minimization (ERM) algorithm and establish PAC-learning guarantees. Our initial experiments demonstrate the performance of our algorithm and examine more efficient approaches to identifying optimal consensus regions. We find that through selectively querying users on an existing sample of statements, we can reduce the number of queries needed to a practical number.
\end{abstract}

% \section{Main Question}

% \begin{itemize}
%     \item Can we find a region in a one-dimensional opinion space that captures some area of maximal approval?
% \end{itemize}

% \begin{enumerate}
%     \item Passive PAC: Given a sample of points from a distribution $\mathcal{D}_I$, how do we best identify the region of maximum consensus?
%     \item Active PAC: Given voters each having a single region of consensus, how do we query the voters to best identity each voter's approval region?
% \end{enumerate}

\section{Introduction}
\label{sec:introduction}

Identifying regions of maximal approval, or common ground, within an opinion space is a fundamental task for understanding collective sentiment and informing group decisions. This is particularly relevant for text-based online deliberation platforms such as Polis\footnote{\url{https://pol.is}} \cite{small2021polis} and Remesh\footnote{\url{https://www.remesh.ai/}}, which aim to distill areas of agreement from complex social discussions. In some approaches, like those used by Polis, opinions are represented as points in a metric space derived through embedding and dimensionality reduction where proximity signifies similarity. This spatial approach to opinion representation has roots in both political theory \cite{merrill1999unified} and social choice \cite{bulteau2021aggregation} and allows for a geometric interpretation of consensus.

Platforms for online deliberation typically gather opinions of participants through their approval or disapproval of specific statements. While a statement represents a single point in the broader opinion space a set of broadly agreeable statements  
% may extend beyond a single point to 
can be naturally viewed as a contiguous region of related viewpoints. Separately, the \textit{salience} of the issues is critical for meaningful consensus. For instance, in a discussion about AI development, a statement like ``Historically most progress on AI has been made in academia'' might receive universal approval. However, this may be an opinion that all users take for granted, and thus may not be at the core of the debate. In contrast, opinions on whether AI models should be open-source, open-weights, or closed-source are discussed and debated more broadly. If an opinion in this ``intellectual property'' region is also highly approved, it might represent a more meaningful consensus statement. Thus, an ideal consensus region should reflect high approval on salient segments of the discussion, rather than primarily including points receiving widespread approval, which are less relevant.

Our goal is to query participants about their approval preferences in order to find a region of responses that maximizes approval. In this paper, we assume that preferences are single-peaked along one dimension. This one-dimensional (1D) space can be thought of as a relevant dimension derived from higher-dimensional opinion space (e.g., via embedding and dimensionality reduction). We focus on the \textit{passive learning} setting: given a sample of issues (points) drawn from an underlying distribution and labelled by voters, how do we best identify the interval representing maximum consensus? We formalize this problem, provide an efficient algorithm for finding an empirical solution, and derive PAC-style theoretical guarantees on its performance. This work serves as a foundation for more complex scenarios, including active learning where one might query voters strategically.\footnote{In the Polis context, this is referred to as {\em opinion} or {\em comment routing} \cite{small2021polis}.}

The main contributions of this paper are:
\begin{itemize}
    \item A formal problem definition for passive 1D interval-based consensus finding, incorporating the notion of issue salience through an underlying distribution.
    \item An efficient Empirical Risk Minimization (ERM) algorithm for this problem.
    \item PAC learning analysis, including the pseudo-dimension of the function class and sample complexity bounds.
    \item Experimental validation and exploration of query strategies that make learning optimal regions dramatically more efficient.
\end{itemize}

\subsection{A Concrete Example}
\label{sec:concrete_example}

To illustrate our one-dimensional model, consider the ongoing discussion surrounding artificial intelligence (AI) safety. This debate encompasses a spectrum of views, from strong advocacy for rapid, unhindered AI development to calls for stringent regulation or even moratoria due to safety concerns\footnote{https://futureoflife.org/open-letter/pause-giant-ai-experiments/}. Such opinions, though multifaceted, can often be projected onto a principal axis representing, for example, a ``progress vs. precaution'' spectrum. Along this dimension, individuals may have an interval of acceptable stances, as depicted by the green regions in \autoref{fig:opinion-agreeability}. Further, the distribution of opinions (depicted in yellow) might have more mass around moderate positions as a larger number of moderate opinions were put forward by users. 

\begin{figure}[t]
    \centering
    \includegraphics[keepaspectratio,width=0.9\columnwidth]{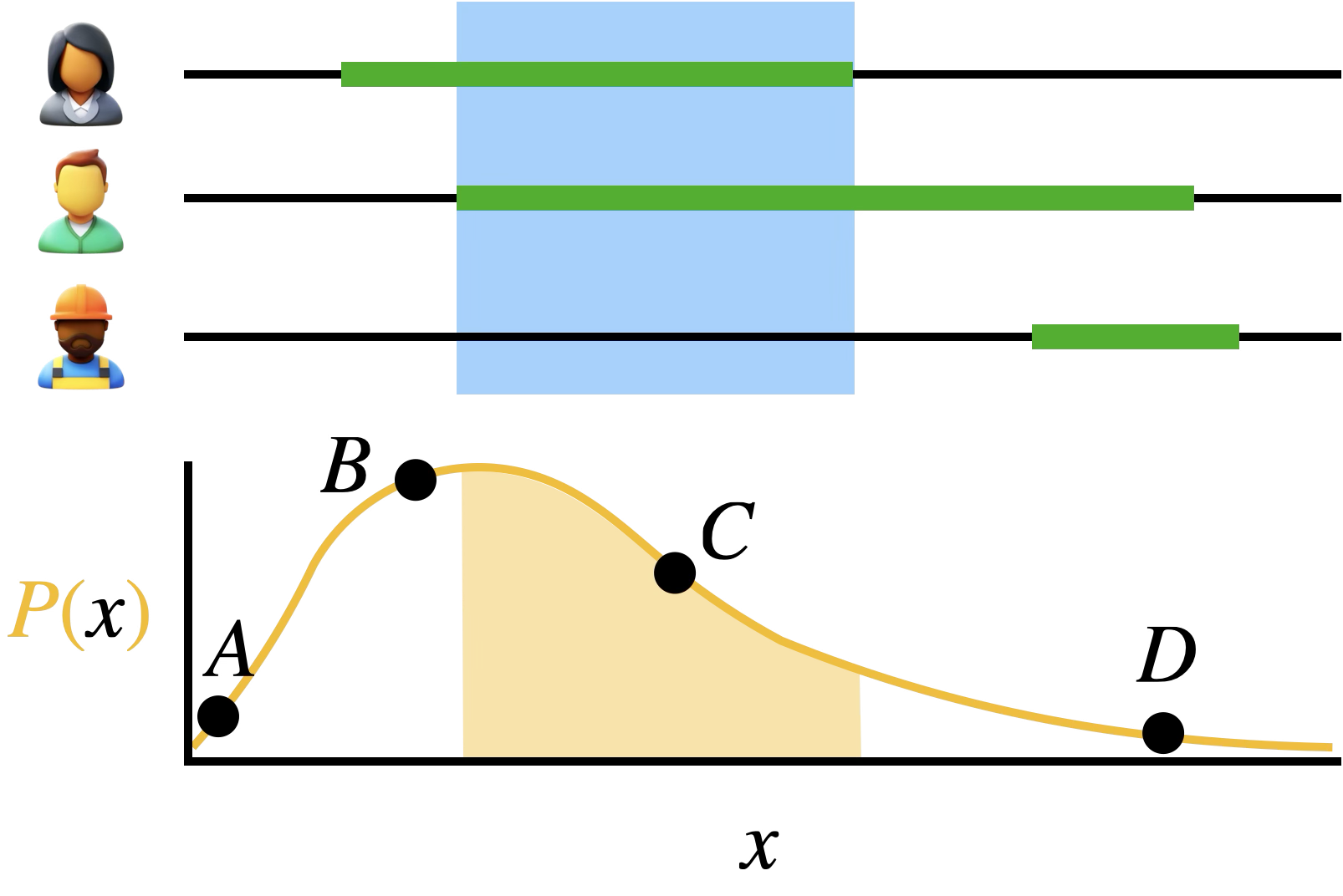}
    \caption{A conceptual representation of approval intervals for different users along a 1D opinion spectrum.}
    \label{fig:opinion-agreeability}
\end{figure}

% \begin{figure}[t]
%     \centering
%     \includegraphics[keepaspectratio,width=\columnwidth]{plots/opinion_probabilities.png}
%     \caption{A hypothetical probability distribution \(P(x)\) of opinions along the AI safety spectrum. \ben{Needs updating. Shouldn't match the agreement intervals above.}}
%     \label{fig:opinion-frequency}
% \end{figure}

Specific stances within this AI safety opinion space might be mapped to points along this dimension. For example:
\begin{itemize}
    \item[\textbf{A}:] AI development should proceed unhindered by regulation, focusing on maximizing progress.
    \item[\textbf{B}:] AI safety should primarily be governed by industry self-regulation and market forces.
    \item[\textbf{C}:] Government regulation of AI development is important for ensuring public safety and ethical alignment.
    \item[\textbf{D}:] Development of AI capabilities beyond a certain threshold (e.g., superhuman intelligence) should be permanently banned.
\end{itemize}
Our framework aims to identify an interval (depicted in blue) on this 1D spectrum that best represents the overall consensus, considering both the approval patterns of individuals and the relative salience (frequency) of different positions.

% \ben{Do we clearly justify the "single approval interval per voter" assumption anywhere?}\nimrod{I guess that we can naturally from single-peaked}

\section{Problem Definition}
\label{sec:problem_definition}

We address a learning problem in one dimension (1D). We are provided with $n$ initial intervals, denoted $I_c \subset \mathbb{R}$ for $c=1, \dots, n$. These can be conceptualized as representing the approval region of $n$ distinct voters over a continuous space of issues $x \in \mathbb{R}$. The primary goal is to determine a single hypothesis interval $\hat{I} \subset \mathbb{R}$ that best summarizes or agrees with these voter intervals, according to an objective function defined below. This is considered within a passive learning framework where issues are sampled i.i.d.\ from an underlying, possibly unknown, distribution $P(x)$\footnote{For example, in the case of Pol.is, this underlying distribution is generated by the participants of the deliberation writing statements.}.

\begin{definition}[Individual Label Function]
Let $I_c \subset \mathbb{R}$ be an individual voter interval. The label function $L_c(x)$ for an issue $x \in \mathbb{R}$ with respect to interval $I_c$ is defined as:
$$L_c(x) = 2 \cdot \indicator{x \in I_c} - 1.$$
\end{definition}
Thus, $L_c(x) = +1$ if $x \in I_c$ (the voter approves the issue) and $L_c(x) = -1$ if $x \notin I_c$ (the voter disapproves or does not endorse the issue within this interval).

\begin{definition}[Combined Label Function]
\label{def:combined_label_function}
Let $I_1, \dots, I_n \subset \mathbb{R}$ be $n$ known voter intervals. The combined label function $l(x)$ for any issue $x \in \mathbb{R}$ is the sum of the individual labels:
\begin{align}
&l(x) = \sum_{c=1}^{n} L_c(x) = 2 \sum_{c=1}^{n} \indicator{x \in I_c} - n. 
%\\
%&= \sum_{c=1}^{n} (2 \cdot \indicator{x \in I_c} - 1) = 2 \sum_{c=1}^{n} \indicator{x \in I_c} - n
\end{align}
\end{definition}
So, if an issue $x$ is approved by exactly $k$ of these $n$ voters (i.e., $x$ falls into $k$ of the $I_c$ intervals), then $l(x) = 2k - n$. The function $l(x)$ is an integer taking values in $\{-n, -n+2, \dots, n-2, n\}$. This function $l(x)$ represents the net agreement among the $n$ voters regarding issue $x$; a positive value indicates that more voters approve $x$ than disapprove, and a negative value indicates the opposite. If $\phi(x) = (\sum_{c=1}^n \indicator{x \in I_c})/n$ is the fraction of voters approving $x$, then $l(x) = n(2\phi(x)-1)$.

\subsection{Objective Function: Maximizing Net Agreement} % New subsection for the objective

Our goal is to find a hypothesis interval \(\hat{I} \in \mathcal{H}\), where \(\mathcal{H} = \{[a,b] : a, b \in \mathbb{R}, a \le b\}\) is the class of all closed intervals, that represents the region of maximum consensus. We quantify consensus by the total net agreement accumulated within the chosen interval, weighted by the probability of issues appearing there.

Consider a candidate interval \(\hat{I}\). For each voter \(c\), we want issues within \(\hat{I}\) to align with their preference \(I_c\). That is, we prefer \(x\) to be in both \(\hat{I}\) and \(I_c\) (contributing \(+1\) to \(L_c(x)\)) or outside both, and we dislike \(x\) being in \(\hat{I}\) but outside \(I_c\) (contributing \(-1\) to \(L_c(x)\)). Summing over all voters, the quantity \(l(x) = \sum_c L_c(x)\) reflects the overall net agreement at point \(x\). A desirable consensus interval \(\hat{I}\) should contain issues \(x\) where \(l(x)\) is predominantly positive.
This intuition leads to maximizing the expected value of \(l(x)\) for issues \(x\) that fall within the hypothesis interval \(\hat{I}\).

\begin{definition}[True Objective Function]
\label{def:true_objective}
Given the combined label function \(l(x)\) and the underlying issue distribution \(P(x)\), the quality of a hypothesis interval \(I \in \mathcal{H}\) is measured by the objective function \(\Phi(I)\):
\begin{equation}
    \Phi(I) = \mathbb{E}_{x \sim P} [l(x) \indicator{x \in I}] = \int_{I} l(x) dP(x). 
\end{equation}
The goal is to find an interval $I^* = \arg\max_{I \in \mathcal{H}} \Phi(I)$.
\end{definition}

This objective \(\Phi(I)\) directly calculates the expected contribution to the net agreement score from issues within the interval \(I\). One can also think of maximizing \(\Phi(I)\) as equivalent to maximizing the average agreement between the voters' labels \(L_c(x)\) and the label assigned by the interval \(I\), \(L_I(x) = 2\cdot\indicator{x \in I} - 1\), specifically \(\frac{1}{n}\sum_c \mathbb{E}[L_c(x) L_I(x)]\), up to scaling and additive constants.

In practice, the true distribution \(P(x)\) is unknown. We are instead given a sample of \(m\) issues \(S = \{x_1, \dots, x_m\}\) drawn i.i.d. from \(P(x)\). We therefore use an empirical version of the objective.

\begin{definition}[Empirical Objective Function]
\label{def:empirical_objective}
Given a sample \(S = \{x_1, \dots, x_m\}\) drawn i.i.d. from \(P(x)\), the empirical objective function \(\hat{\Phi}_S(I)\) for an interval \(I \in \mathcal{H}\) is the sample average of the net agreement within the interval:
\begin{equation}
\hat{\Phi}_S(I) = \frac{1}{m} \sum_{i=1}^{m} l(x_i) \indicator{x_i \in I}.
\end{equation}
The Empirical Risk Minimization (ERM) principle seeks the interval 
$\hat{I}_{ERM} = \arg\max_{I \in \mathcal{H}} \hat{\Phi}_S(I)$.
\end{definition}
Observe that maximizing \(\hat{\Phi}_S(I)\) is equivalent to maximizing the unnormalized sum \(\sum_{i=1}^{m} l(x_i) \indicator{x_i \in I}\).

\section{Algorithm for Passive Interval Selection}
\label{sec:algorithm_passive}

The function $l(x)$ is a step function, with its value changing only at the endpoints $a_c, b_c$ of the given voter intervals $I_c$. The problem of maximizing $\hat{\Phi}_S(\hat{I})$ is an Empirical Risk Minimization (ERM) task.

\begin{proposition}
\label{prop:interval_representation_revised}
The interval $\hat{I}_{ERM}$ that maximizes the empirical sum $\sum_{x_i \in \hat{I}} l(x_i)$ over $m$ sample issues $x_1, \dots, x_m$ (assuming $m \ge 1$) can be chosen such that its endpoints are sample issues. That is, $\hat{I}_{ERM} = [x_{(j)}, x_{(k)}]$ for some $1 \le j \le k \le m$, where $x_{(1)} \le x_{(2)} \le \dots \le x_{(m)}$ are the sorted sample issues.
\end{proposition}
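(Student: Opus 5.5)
Our approach is a shrinking argument: start from any interval that attains the empirical maximum and contract it onto the sample points it contains. This does not change which samples are covered, so the objective is unchanged. The only subtle case is an optimal interval containing no sample at all.

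\textbf{Step 1: the maximum exists.} The empirical sum $F(I)=\sum_{i:\,x_i\in I} l(x_i)$ depends on $I$ only through the subset $S\cap I$ of sample points it covers. For a closed interval this subset is always a contiguous block in the sorted order $x_{(1)}\le\dots\le x_{(m)}$, or empty. There are therefore finitely many achievable values, and a maximizer $I=[a,b]$ exists.

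\textbf{Step 2: nonempty coverage.} Suppose $S\cap I\neq\emptyset$. Let $x_{(j)}$ be the smallest covered sample and $x_{(k)}$ the largest, so $j\le k$ and $a\le x_{(j)}\le x_{(k)}\le b$. Then $I'=[x_{(j)},x_{(k)}]\subseteq I$. It still contains every sample in $I$, because each covered sample lies between the smallest and largest covered sample. Hence $S\cap I'=S\cap I$, so $F(I')=F(I)$ and $I'$ is also a maximizer with sample endpoints. Ties among equal sample values cause no problem, since the order statistics are taken with multiplicity.

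\textbf{Step 3: empty coverage (the main obstacle).} If $S\cap I=\emptyset$, then $F(I)=0$ and the maximum value is $0$. We must still exhibit a maximizer with sample endpoints. A degenerate interval $[x_{(j)},x_{(j)}]$ covers every sample equal to $x_{(j)}$, so it scores $c_j\,l(x_{(j)})$, where $c_j\ge1$ is the multiplicity of that value. This score is $\le 0$ because the maximum is $0$.

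\begin{itemize}
\item If some sample has $l(x_{(j)})=0$, possible when $n$ is even, that degenerate interval scores $0$ and we are done.
\item Otherwise every single-point interval scores strictly negative. The endpoint-restricted form then genuinely fails, and only an interval avoiding all samples achieves $0$.
\end{itemize}

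We would handle this last case in one of two ways:
\begin{itemize}
\item read the proposition as allowing the empty interval as an admissible choice, since it attains the same empirical value $0$ and in the algorithm corresponds to the "empty block"; or
\item add the mild convention that the maximum is positive, or that the empty hypothesis is considered alongside the blocks $[x_{(j)},x_{(k)}]$.
\end{itemize}

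With that convention stated, Steps 1–2 give the proposition. The nonempty case is the substantive content; the empty-coverage edge case is where care is needed.
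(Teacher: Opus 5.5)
Your Step 2 is the paper's own argument: shrink an optimal interval to $[x_{(j)},x_{(k)}]$, spanned by its smallest and largest covered samples, which leaves the covered set, and hence the sum, unchanged. Your Step 3 is more careful than the paper, which only remarks that an interval covering no samples scores $0$ and never exhibits a sample-endpoint interval attaining that value. You are right that no such interval exists when every $l(x_i)<0$ (for instance, $n$ odd and every sample approved by a minority of voters); this matches Algorithm~\ref{alg:erm_1d_interval_revised}, which in that case returns a single point with negative score, so the paper is implicitly maximizing only over intervals that cover at least one sample.
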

\begin{proof}
Let $S(\hat{I}) = \sum_{i=1}^{m} l(x_i) \indicator{x_i \in \hat{I}}$. Consider an interval $I'=[a,b]$ that maximizes this sum. The value of $S(I')$ depends only on the set of sample issues contained within $I'$. If $I'$ contains no sample issues, $S(I')=0$. If $I'$ contains sample issues, let $x_{(j')}$ be the smallest sample issue in $I'$ and $x_{(k')}$ be the largest sample issue in $I'$. The interval $[x_{(j')}, x_{(k')}]$ contains exactly the same set of sample issues as $I'$. Therefore, $S([x_{(j')}, x_{(k')}]) = S(I')$. Thus, an optimal interval whose endpoints are from the set of sample issues can always be constructed.
\end{proof}

Proposition \ref{prop:interval_representation_revised} implies that the problem reduces to finding indices $j^*$ and $k^*$ (with $1 \le j^* \le k^* \le m$) that maximize the sum $S_{j,k} = \sum_{i=j}^{k} h_{(i)}$, where $h_{(i)} = l(x_{(i)})$ is the combined label of the $i$-th sorted sample issue $x_{(i)}$. This is the Maximum Subarray Sum problem on the array of scores $H = [h_{(1)}, \dots, h_{(m)}]$.

\begin{algorithm}[tb]
\caption{ERM for 1D Interval Selection (assuming $m \ge 1$)}
\label{alg:erm_1d_interval_revised}
\begin{algorithmic}[1]
\State \textbf{Input:} Sample issues $x_1, \dots, x_m$; voter intervals $I_1, \dots, I_n$.
\State \textbf{Output:} ERM interval $\hat{I}_{ERM} = [x_{(j^*)}, x_{(k^*)}]$.

\State For each sample issue $x_s$, compute its score $l(x_s) = 2 \sum_{c=1}^{n} \indicator{x_s \in I_c} - n$.
\State Sort the sample issues to get $x_{(1)} \le x_{(2)} \le \dots \le x_{(m)}$. Let $h_{(i)} = l(x_{(i)})$.

\State Initialize $S_{max} \leftarrow -\infty$.
\State Initialize $S_{current} \leftarrow 0$.
\State Initialize $j^* \leftarrow 1$, $k^* \leftarrow 1$ (placeholders, assuming $m \ge 1$).
\State Initialize $j_{cand} \leftarrow 1$.

\For{$i = 1$ to $m$}
    \State $S_{current} \leftarrow S_{current} + h_{(i)}$
    \If{$S_{current} > S_{max}$}
        \State $S_{max} \leftarrow S_{current}$
        \State $j^* \leftarrow j_{cand}$
        \State $k^* \leftarrow i$
    \EndIf
    \If{$S_{current} < 0$}
        \State $S_{current} \leftarrow 0$
        \State $j_{cand} \leftarrow i + 1$
    \EndIf
\EndFor
\If{$S_{max} = -\infty$ and $m \ge 1$} \Comment{All $h_{(i)}$ are negative or $m=0$ case not fully handled by above loop if no positive subarray. If all $h_{(i)}$ are negative, find max $h_{(i)}$.}
    \State $S_{max} \leftarrow h_{(1)}$; $j^* \leftarrow 1$; $k^* \leftarrow 1$;
    \For{$i = 2$ to $m$}
        \If{$h_{(i)} > S_{max}$}
            \State $S_{max} \leftarrow h_{(i)}$; $j^* \leftarrow i$; $k^* \leftarrow i$;
        \EndIf
    \EndFor
\EndIf

\State $\hat{I}_{ERM} = [x_{(j^*)}, x_{(k^*)}]$.
\State \Return $\hat{I}_{ERM}$.
\end{algorithmic}
\end{algorithm}
Algorithm \ref{alg:erm_1d_interval_revised} details this process. The Maximum Subarray Sum component (lines 5-15) uses Kadane's algorithm \cite{bentley1984programming}. The standard Kadane's algorithm finds the maximum sum; if all elements are negative, it might return 0 (for an empty subarray) or the largest single negative element depending on initialization. The version presented, initializing $S_{max}$ to $-\infty$, is designed to find the interval $[x_{(j)},x_{(k)}]$ maximizing $\sum_{i=j}^k l(x_{(i)})$, even if this sum is negative. If all $l(x_i)$ are negative, it will find the $x_{(i)}$ with the least negative $l(x_{(i)})$. A slight modification (lines 16-21) ensures that if all subarray sums are negative (including single elements), it correctly picks the one with the largest (least negative) sum, corresponding to a single point interval.

\subsection{Algorithmic Complexity of ERM}
\label{prop:algo_complexity_final}
The ERM interval $\hat{I}_{ERM}$ can be found efficiently. The primary computational steps are:
\begin{enumerate}
    \item Computing the scores $l(x_i)$ for all $m$ sample points. Naively, this takes $O(nm)$ time by checking each point against each of the $n$ voter intervals $I_c$.
    \item Sorting the sample points $x_i$ to obtain $x_{(i)}$ and their associated scores $h_{(i)}$. This takes $O(m \log m)$ time.
    \item Applying Kadane's algorithm (Algorithm \ref{alg:erm_1d_interval_revised}, lines 5-15) to the $m$ scores $h_{(i)}$, which runs in $O(m)$ time.
\end{enumerate}
The total time complexity using this naive score computation is $O(nm + m \log m)$.
Alternatively, a sweep-line algorithm can be used to compute all $l(x_i)$ values more efficiently. This involves creating a sorted list of all $2n$ endpoints of the given intervals $I_c$ and all $m$ sample points $x_i$. The sweep processes these $O(n+m)$ points in order. As an endpoint of an $I_c$ is crossed, the count of active intervals (used to determine $l(x)$ for points in that segment) is updated. When a sample point $x_i$ is encountered, its $l(x_i)$ value is computed based on the current count of intervals it falls into. This approach takes $O((n+m)\log(n+m))$ time for both sorting the critical points and performing the sweep to assign scores. If the sample points $x_i$ are already sorted by this process, the subsequent Kadane's algorithm step takes $O(m)$. Thus, the total complexity using the sweep-line method is $O((n+m)\log(n+m))$.

\section{Theoretical Analysis}
\label{sec:theoretical_analysis}

We establish Probably Approximately Correct (PAC) learning guarantees for the ERM approach. Let $\mathcal{H}$ be the hypothesis class of all intervals $[a,b] \subset \mathbb{R}$. The true objective for $R \in \mathcal{H}$ is $\Phi(R) = \mathbb{E}_{x \sim P}[l(x)\indicator{x \in R}]$, and the empirical objective is $\hat{\Phi}_S(R) = \frac{1}{m} \sum_{i=1}^m l(x_i)\indicator{x_i \in R}$.
To analyze the uniform convergence of $\hat{\Phi}_S(R)$ to $\Phi(R)$, we consider the properties of the function class $g_R(x) = l(x)\indicator{x \in R}$. The complexity of uniform convergence depends on the function class $\mathcal{G} = \{g_R(x) : R \in \mathcal{H}\}$. The functions $g_R(x) \in \mathcal{G}$ are bounded by $M_0 = \max_x |l(x)| = n$.

\begin{lemma}
\label{lemma:pdim_1d}
The pseudo-dimension of the function class $\mathcal{G} = \{g_R(x) = l(x)\indicator{x \in R} : R \text{ is an interval in } \mathbb{R}\}$, denoted $Pdim(\mathcal{G})$, is 2, provided $l(x)$ is not identically zero on $\mathbb{R}$.
\end{lemma}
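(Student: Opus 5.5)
We prove $Pdim(\mathcal{G}) = 2$ by showing a lower bound (two points are pseudo-shattered) and an upper bound (no three points are).

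\emph{Reduction.} For any point $x$ and interval $R$, the value $g_R(x)$ is either $0$ (if $x \notin R$) or $l(x)$ (if $x \in R$). Recall that $x_1,\dots,x_d$ are pseudo-shattered if there are witnesses $r_1,\dots,r_d$ such that, for every pattern $b \in \{0,1\}^d$, some $R$ satisfies $g_R(x_i) > r_i \iff b_i = 1$. Fix a point $x_i$.
\begin{itemize}
\item If $l(x_i) = 0$, then $g_R(x_i) = 0$ for every $R$, so both patterns at $x_i$ cannot be realized; hence $l(x_i) \neq 0$.
\item Since $g_R(x_i)$ takes only the two values $0$ and $l(x_i)$, realizing both patterns forces $r_i$ to lie between them: $\min(0,l(x_i)) \le r_i < \max(0,l(x_i))$.
\item Then the event ``$g_R(x_i) > r_i$'' is exactly the event $x_i \in R$ when $l(x_i) > 0$, and exactly $x_i \notin R$ when $l(x_i) < 0$.
\end{itemize}
Thus pseudo-shattering $x_1,\dots,x_d$ is equivalent to the following: after flipping the pattern at the coordinates where $l(x_i) < 0$, every subset of $\{x_1,\dots,x_d\}$ arises as $R \cap \{x_1,\dots,x_d\}$ for some interval $R$. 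Since flipping coordinates is a bijection on $\{0,1\}^d$, this says the points are shattered, in the ordinary VC sense, by intervals, with every $l(x_i) \neq 0$. This also forces the points to be distinct, because equal points always receive equal $g_R$ values.

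\emph{Lower bound.} We need two distinct points where $l$ is nonzero. Such points exist: outside the bounded set $\bigcup_c I_c$ we have $l(x) = -n \neq 0$ (assuming $n \ge 1$, which is implied by the hypothesis that $l$ is not identically zero). So pick any $x_1 < x_2$ there and set $r_i = l(x_i)/2$. The intervals of the forms empty-type, $[x_1,x_1]$, $[x_2,x_2]$ and $[x_1,x_2]$ realize all four subsets of $\{x_1,x_2\}$; for the empty subset, take an interval $[a,a]$ with $a$ away from both points. By the reduction, this gives $Pdim(\mathcal{G}) \ge 2$.

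\emph{Upper bound.} Suppose $x_1 < x_2 < x_3$ were pseudo-shattered. By the reduction, intervals would have to realize every subset of $\{x_1,x_2,x_3\}$, in particular $\{x_1,x_3\}$. But any interval containing $x_1$ and $x_3$ contains $x_2$ by convexity, a contradiction. Hence $Pdim(\mathcal{G}) \le 2$.

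The main care point is the reduction step, namely that each witness $r_i$ must separate $0$ from $l(x_i)$. This is where the strict inequality convention in the definition of pseudo-dimension matters; with a $\ge$ convention the same argument goes through after adjusting the inequalities.
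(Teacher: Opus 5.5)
Your proof is correct. The lower bound is essentially the paper's: witnesses $r_j = l(x_j)/2$, a sign-dependent flip, and the fact that intervals shatter two points. Two details there are handled better than in the paper.
\begin{itemize}
\item You justify the existence of two points with $l \neq 0$ by noting that $l \equiv -n$ off the bounded set $\bigcup_c I_c$. The paper only asserts that such points exist, and concedes in a parenthetical that $Pdim(\mathcal{G})$ would be $1$ otherwise. Your argument does use that the $I_c$ have finite endpoints, as in the paper's model. Without that, the lemma fails: for $I_1 = (-\infty,0]$ and $I_2 = [0,\infty)$, $l$ is nonzero only at $0$.
\item You realize the empty pattern with a degenerate interval $[a,a]$, rather than with $R = \emptyset$, which is not in $\mathcal{H}$.
\end{itemize}

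For the upper bound the routes genuinely differ. The paper never examines the thresholds. It observes that, for arbitrary $r_1,r_2,r_3$, the output vector $(b_1,b_2,b_3)$ is a fixed coordinatewise function of the membership pattern $(\indicator{x_1 \in R}, \indicator{x_2 \in R}, \indicator{x_3 \in R})$. That pattern takes at most $7$ values because $(1,0,1)$ is impossible, so at most $7$ of the $8$ output patterns occur.

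You instead first prove that any witness must satisfy $\min(0,l(x_i)) \le r_i < \max(0,l(x_i))$. This makes the coordinatewise map a bijection, so pseudo-shattering by $\mathcal{G}$ is exactly VC-shattering by intervals of points where $l \neq 0$.

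The paper's counting is shorter. It also shows at once that $Pdim(\mathcal{G})$ is at most the VC dimension of the membership class for any class of the form $x \mapsto \psi(x, \indicator{x \in R})$. Your reduction costs a little more but characterizes the pseudo-shattered sets exactly. It gives both bounds from a single lemma and pinpoints what the nondegeneracy hypothesis on $l$ must supply.
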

\begin{proof}
(Sketch) To show $Pdim(\mathcal{G}) \ge 2$: Choose two points $x_1 < x_2$ where $l(x_1), l(x_2) \neq 0$. Appropriate thresholds $r_1, r_2$ can be chosen such that the outcomes $\mathbf{1}\{l(x_j)\indicator{x_j \in R} > r_j\}$ depend on $\indicator{x_j \in R}$ or $1-\indicator{x_j \in R}$. Since intervals can shatter two points for indicators ($\mathrm{VCdim}=2$), $\mathcal{G}$ can pseudo-shatter two points.
To show $Pdim(\mathcal{G}) \le 2$: For any three points $x_1 < x_2 < x_3$ and any thresholds $r_1,r_2,r_3$, the pattern of $(\indicator{x_1 \in R}, \indicator{x_2 \in R}, \indicator{x_3 \in R})$ cannot achieve $(1,0,1)$. This limitation on indicator patterns restricts the achievable patterns for $(b_1(R),b_2(R),b_3(R))$, preventing pseudo-shattering of 3 points. (Detailed proof in the appendix).
\end{proof}

\begin{theorem}[Sample Complexity]
\label{thm:sample_complexity_formal}
Let $Pdim(\mathcal{G})=d_{PD}=2$ and $M_0 = n$. For any $\epsilon > 0$ and $\delta \in (0,1)$, if the number of i.i.d. samples $m$ satisfies
% \begin{align}
% m \ge \frac{32M_0^2}{\epsilon^2} \left( d_{PD} \ln\left(\frac{32eM_0}{\epsilon}\right) + \ln(4e(d_{PD}+1)) + \ln\left(\frac{1}{\delta}\right) \right)
% \end{align}

\begin{equation}
\begin{split}
m &\ge \frac{32M_0^2}{\epsilon^2} \left( d_{PD} \ln\left(\frac{32eM_0}{\epsilon}\right) \right. \\
& \left. \hspace{2em} + \ln(4e(d_{PD}+1)) + \ln\left(\frac{1}{\delta}\right) \right),
\end{split}
\end{equation}
then with probability at least $1-\delta$, $\sup_{R \in \mathcal{H}} |\Phi(R) - \hat{\Phi}_S(R)| \le \epsilon$.
This implies that the ERM hypothesis $\hat{R}_{ERM}$ satisfies $\Phi(\hat{R}_{ERM}) \ge \Phi(R^*) - 2\epsilon$, where $R^* = \arg\max_{R \in \mathcal{H}} \Phi(R)$. For $d_{PD}=2$,
$$ m \ge \frac{32n^2}{\epsilon^2} \left( 2 \ln\left(\frac{32en}{\epsilon}\right) + \ln(12e) + \ln\left(\frac{1}{\delta}\right) \right). $$
\end{theorem}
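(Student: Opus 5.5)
The plan is to invoke a standard uniform convergence theorem for real-valued function classes of bounded pseudo-dimension, with the pseudo-dimension supplied by Lemma~\ref{lemma:pdim_1d}. Specifically, I would use the version of Pollard's bound as presented, e.g., in Anthony and Bartlett: for a class $\mathcal{G}$ of functions with values in $[-M_0, M_0]$ and $Pdim(\mathcal{G}) = d$,
\begin{equation*}
\Pr\Bigl[\sup_{g\in\mathcal{G}} \bigl|\mathbb{E}g - \hat{\mathbb{E}}_S g\bigr| > \epsilon\Bigr] \le 4\,\mathcal{N}_1\bigl(\epsilon/16, \mathcal{G}, 2m\bigr)\, e^{-\epsilon^2 m/(32 M_0^2)} .
\end{equation*}
I would then bound the $L_1$ covering number via Haussler's packing bound,
\begin{equation*}
\mathcal{N}_1(\gamma,\mathcal{G},2m) \le e(d+1)\Bigl(\tfrac{2eM_0}{\gamma}\Bigr)^{d}.
\end{equation*}
With $\gamma = \epsilon/16$ this becomes $e(d+1)(32eM_0/\epsilon)^d$. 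Here $M_0 = n$, since $|g_R(x)| \le |l(x)| \le n$.

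The key steps, in order, are as follows.
\begin{enumerate}
\item Note that $\Phi(R) = \mathbb{E}[g_R]$ and $\hat{\Phi}_S(R) = \hat{\mathbb{E}}_S[g_R]$, so the supremum over $R \in \mathcal{H}$ equals the supremum over $\mathcal{G}$.
\item Plug $d_{PD} = 2$ from Lemma~\ref{lemma:pdim_1d} and $M_0 = n$ into the bound. The failure probability is then at most $4e(d+1)(32eM_0/\epsilon)^d \exp(-\epsilon^2 m/(32M_0^2))$.
\item Set this quantity $\le \delta$, take logarithms, and solve for $m$. This yields exactly the displayed condition $m \ge \frac{32M_0^2}{\epsilon^2}\bigl(d\ln(32eM_0/\epsilon) + \ln(4e(d+1)) + \ln(1/\delta)\bigr)$.
\item Specialize to $d = 2$, which gives $\ln(12e)$ and the final formula.
\end{enumerate}
If $l \equiv 0$, the pseudo-dimension is $0$ and the statement holds trivially, since both objectives vanish.

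For the ERM consequence, I would use the standard sandwich argument on the event where uniform convergence holds:
\begin{equation*}
\Phi(\hat{R}_{ERM}) \ge \hat{\Phi}_S(\hat{R}_{ERM}) - \epsilon \ge \hat{\Phi}_S(R^*) - \epsilon \ge \Phi(R^*) - 2\epsilon .
\end{equation*}
The middle inequality uses that $\hat{R}_{ERM}$ maximizes $\hat{\Phi}_S$ over $\mathcal{H}$. By Proposition~\ref{prop:interval_representation_revised}, Algorithm~\ref{alg:erm_1d_interval_revised} attains this maximum. One edge case needs care: when the sample is empty or all labels are negative, the algorithm returns a degenerate interval $[x_{(j)}, x_{(j)}]$, while an interval with no sample points would give value $0$. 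I would therefore define the ERM as the exact maximizer over all of $\mathcal{H}$, including intervals containing no sample points, so that the inequality holds.

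The main obstacle is bookkeeping rather than conceptual. The stated constants ($32$, $16$, $4e(d+1)$) are tied to one particular formulation of the Pollard and Haussler bounds, so I would have to match the exact form of the cited theorem. This includes the range convention: the functions take values in $[-M_0, M_0]$, an interval of width $2M_0$. Whether that width produces $32M_0^2$ or a different constant depends on the version used. If the constants do not match exactly, I would rescale $\mathcal{G}$ to $[0,1]$ via $(g + M_0)/(2M_0)$ and apply the $[0,1]$ form, absorbing the factors. I would also need to check measurability of the supremum. That check is routine, because the supremum can be reduced to a countable family of intervals with rational endpoints together with the finitely many breakpoints of $l$.
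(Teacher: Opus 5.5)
The overall route is the paper's: an Anthony--Bartlett symmetrization bound, Haussler's covering bound with $Pdim(\mathcal{G})=2$, solving for $m$, then the sandwich. The step that fails is the one you flagged as bookkeeping. Both inequalities you quote are calibrated to functions whose range has width $M_0$ (e.g.\ $[0,M_0]$), but $g_R$ takes values in $[-n,n]$, a range of width $2n$.

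Carry out your own fallback correctly: apply the $[0,1]$ form $4\mathcal{N}_1(\epsilon'/8,\mathcal{G}',2m)\exp(-m\epsilon'^2/32)$ to $\mathcal{G}'=\{(g+M_0)/(2M_0): g\in\mathcal{G}\}$ with $\epsilon'=\epsilon/(2M_0)$. This does give the covering term $e(d+1)(32eM_0/\epsilon)^d$; your $\epsilon/16$ happens to cancel the missing factor $2$ in your Haussler bound. But the exponent becomes $m\epsilon^2/(128M_0^2)$. Solving for $m$ then gives the stated bracket with leading factor $128M_0^2/\epsilon^2$, four times the claimed one. Since the theorem has explicit constants, this factor cannot be ``absorbed''.

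The paper's appendix makes the same slip. It quotes the exponent $32M_0^2$ for $|g|\le M_0$ while rescaling the covering number by $2M_0$. So following the paper's bookkeeping does not rescue the constant.

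The statement itself is nonetheless true, and an elementary argument reaches it without pseudo-dimension. Write $g_R=2\sum_{c=1}^n\mathbf{1}_{I_c\cap R}-n\mathbf{1}_R$; each $I_c\cap R$ is an interval. Let $F$ be the CDF of $P$, $\hat F_m$ the empirical CDF, $\hat P_m$ the empirical measure, and $D_m=\sup_t|F(t)-\hat F_m(t)|$. For every interval $J$ one has $|P(J)-\hat P_m(J)|\le 2D_m$, hence $\sup_R|\Phi(R)-\hat\Phi_S(R)|\le 6nD_m$. The Dvoretzky--Kiefer--Wolfowitz inequality with Massart's constant then bounds the failure probability by $2\exp(-m\epsilon^2/(18n^2))$.

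For $\epsilon<2n$, the stated condition implies $m\ge 18n^2\ln(2/\delta)/\epsilon^2$, because $2\ln(32en/\epsilon)>0$ and $\ln(12e)>\ln 2$. For $\epsilon\ge 2n$ the claim holds deterministically, since $|\Phi|,|\hat\Phi_S|\le n$.

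Your ERM sandwich is fine. Your insistence that $\hat R_{ERM}$ be an exact maximizer over all of $\mathcal{H}$ is a genuine improvement on the paper. When every sample label is negative, Algorithm~\ref{alg:erm_1d_interval_revised} returns a single sample point with negative score, while sample-free intervals score $0$. Proposition~\ref{prop:interval_representation_revised} overlooks this case.
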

\begin{proof} (Sketch)
The proof of the sample complexity (Theorem~\ref{thm:sample_complexity_formal}) relies on standard uniform convergence arguments.
First, the pseudo-dimension of the function class $\mathcal{G} = \{g_R(x) = l(x)\indicator{x \in R}\}$ is determined to be 2, as established in Lemma~\ref{lemma:pdim_1d}.
A uniform convergence bound is then applied. This bound, which is based on methods detailed in Anthony and Bartlett (2009, Ch.~17)~\cite{anthony2009neural}, relates the probability of the maximum deviation between the true expectation $\Phi(R)$ and the empirical expectation $\hat{\Phi}_S(R)$ to the $L_1$-covering numbers of $\mathcal{G}$. Specifically, we use the inequality:
\begin{equation}
\begin{split}
&P\left(\sup_{g \in \mathcal{G}} |E[g] - \hat{E}_S[g]| > \epsilon\right) \\
&\le 4 \mathcal{N}_1(\epsilon/8, \mathcal{G}, 2m) \exp\left(\frac{-m\epsilon^2}{32M_0^2}\right).
\end{split}
\end{equation}
where $M_0 = n$ is the bound on $|l(x)|$. This is derived from the bound given in Theorem 17.1 in Anthony and Bartlett (2009)~\cite{anthony2009neural}.

To bound the $L_1$-covering number $\mathcal{N}_1(\epsilon/8, \mathcal{G}, 2m)$, the functions $g \in \mathcal{G}$ are affinely transformed into a scaled function class $\mathcal{G}' = \{\frac{g+M_0}{2M_0} : g \in \mathcal{G}\}$. Functions in $\mathcal{G}'$ map to $[0,1]$, and this scaling preserves the pseudo-dimension, so $Pdim(\mathcal{G}') = 2$.
Theorem~18.4 from Anthony and Bartlett (2009) is then used to bound the $L_1$-covering number of $\mathcal{G}'$:
\[
\mathcal{N}_1(\eta, \mathcal{G}', N) \le e(d_{PD}+1)\left(\frac{2e}{\eta}\right)^{d_{PD}}.
\]
This bound for $\mathcal{N}_1(\cdot, \mathcal{G}', \cdot)$ is related back to $\mathcal{N}_1(\cdot, \mathcal{G}, \cdot)$ by adjusting the scale factor, yielding:
\[
\mathcal{N}_1(\epsilon/8, \mathcal{G}, 2m) \le e(d_{PD}+1)\left(\frac{32eM_0}{\epsilon}\right)^{d_{PD}}.
\]
Substituting this covering number bound into the uniform convergence inequality, the resulting expression is set to be less than or equal to $\delta$. This inequality is then solved for the sample size $m$, leading to the sample complexity formula presented in Theorem~\ref{thm:sample_complexity_formal}:
\begin{equation}
\begin{split}
m &\ge \frac{32M_0^2}{\epsilon^2} \left( d_{PD} \ln\left(\frac{32eM_0}{\epsilon}\right) + \ln(4e(d_{PD}+1)) \right. \\
&\qquad \left. + \ln\left(\frac{1}{\delta}\right) \right).
\end{split}
\end{equation}
This ensures that with probability at least $1-\delta$, the empirical objective is uniformly within $\epsilon$ of the true objective for all hypotheses in $\mathcal{H}$. A detailed proof is presented in the appendix.
\end{proof}
The above sample complexity is $O\left(\frac{n^2}{\epsilon^2}\left(\ln\frac{n}{\epsilon} + \ln\frac{1}{\delta}\right)\right)$.

% \section{Model/Approach}

% \ben{Start with Passive PAC for Polis setting. Ignore other stuff until later.}

% So it is basically two steps. (1) Show that you have enough labels to approximate the score function reasonably well. Then (2), do the PAC stuff using the approximated score function. 

\section{Experiments}

In this section we describe our experimental framework, introduce specific experiments, and present their results.
The primary goals of our current experiments are:

\begin{enumerate}
    \item Gauge the tightness of the bound on sample complexity identified in \autoref{thm:sample_complexity_formal}.
    \item Explore methods by which we may identify optimal consensus regions using a reduced number of queries.
\end{enumerate}

\subsection{Experimental Framework}

% \ben{We essentially follow Algorithm 1 but some details need clarification for the experimental setting.}

% \ben{Reminder - include final values on things like number of voters, approval interval sizes, level of discretization}

We begin by describing the procedures we follow in initializing an experimental setting. This includes how we select the regions approved by each voter, and the distributions from which we sample. Through all experiments presented in this paper, we have used $n = 100$ voters and performed 100 randomly initialized trials with each set of parameters.

\subsubsection{Constructing Voter Approval Intervals}

% \ben{Note to discuss notation/terminology here with Carter to ensure consistency.}

Each voter approves a single interval in our 1D space. The location and size of this region is not dictated by our theoretical results. To simulate a wide variety of users we construct each voter's approvals based on three parameters: minimum width $w_\text{min}$, maximum width $w_\text{max}$, and center point $p$. These parameters allow us to simulate settings where users are highly agreeable (very large approval intervals), highly disagreeable (very small approval intervals), and where both types exist (large gap between $w_\text{min}$ and $w_\text{max}$).

Interval construction occurs as follows: We select a width $w \in [w_\text{min}, w_\text{max}]$, and a center point $p \in [0, 1]$, both uniformly at random.
From $p$ we construct the interval $(p - \frac{w}{2}, p + \frac{w}{2})$. When this would result in the interval extending above 1, or below 0, we add the ``missing'' width to the other side of the interval (thus, $p$ is not always the actual center point of the interval).

In our experiments, we use four specific ranges of approved region sizes. These are chosen to demonstrate the quality of outcomes across a range of scenarios. The absence of a small region size is intentional -- when all voters approve very small regions, there is typically no area with a positive $l(x)$, resulting in an optimal ``region'' containing a single point.

\subsubsection{Sampling Points}

Through our experiments we consider three distributions from which we sample points. As with voter approval intervals, the choice of these distributions does not affect the applicability of our theoretical results to our experiments but provides insight as to how extensions of our theoretical algorithm could behave under differing empirical conditions. In particular, our sampling distributions are:

\begin{enumerate}
    \item $U(0, 1)$; The uniform distribution between 0 and 1.
    \item $N(0, 1, 0.5, 0.1)$; The truncated Normal distribution with $\mu = 0.5$, $\sigma = 0.1$, bounded in $[0, 1]$.
    \item $Exp(4)$; The truncated Exponential distribution with $\lambda = 4$, bounded in $[0, 1]$.
\end{enumerate}

\subsection{Experimental Results}

Our experimental results are divided into three sections. We first examine how tightness of the lower bound on sample complexity from \Cref{sec:theoretical_analysis}. Subsequently, we explore the quality of two approaches to reducing the total number of queries asked of voters. First by asking fewer voters about each sampled point and, second, by asking each voter about fewer samples.

\subsubsection{Sampling Fewer Points}

We first examine our sample complexity results. In \autoref{fig:varying_num_queries} we decrease the number of sampled points used in identifying the best region beginning from $\frac{1}{\epsilon^2}\left(\ln\frac{n}{\epsilon} + \ln\frac{1}{\delta}\right)$ (a factor of $n^2$ \textit{below} our upper bound) reducing to $10$ sampled points.

\autoref{fig:varying_num_queries} shows that, across distributions, in most cases we can use far fewer samples than our upper bound on sample complexity and attain the same quality as guaranteed theoretically. Of course, as we decrease the number of samples the quality of the best discovered region begins to reduce. In order to develop more empirically practical procedures for identifying consensus regions our subsequent experiments consider two strategies for reducing the total number of sample evaluations performed.

\begin{figure*}
    \centering
    \includegraphics[keepaspectratio,width=\linewidth]{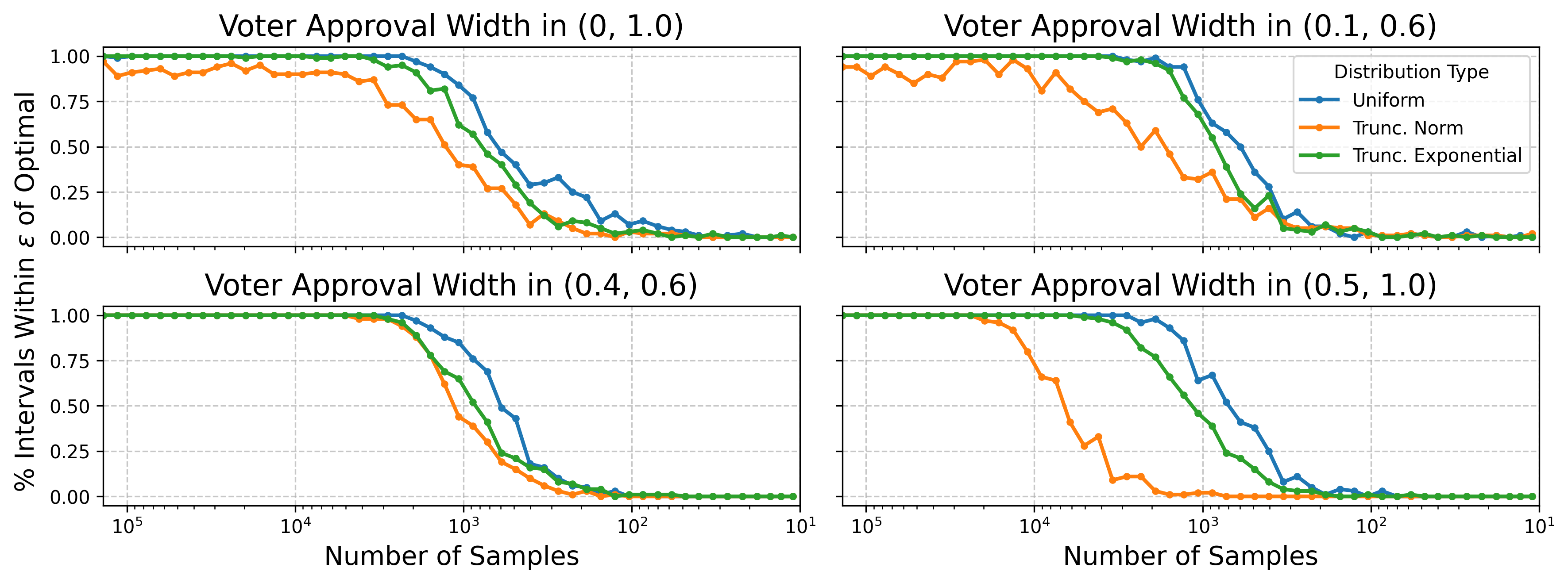}
    \caption{
    The fraction of regions with a score within $\epsilon$ of the optimal region's score as the number of sampled points decreases from $\frac{1}{\epsilon^2}\left(\ln\frac{n}{\epsilon} + \ln\frac{1}{\delta}\right)$ to $10$. 
    Note that, due to computational limitations, the maximum number of samples displayed is a factor of $n^2$ \textit{below} our upper bound found in \Cref{sec:theoretical_analysis}. In general, our approach finds nearly optimal regions using far fewer samples than theoretically necessary. Here $\epsilon = \delta = 0.01$ and we perform 100 trials for each different number of samples to get an empirical estimate of \(\delta\).
    }
    \label{fig:varying_num_queries}
\end{figure*}

\subsubsection{Querying Fewer Voters}

A straightforward approach to reducing the cost of running our procedure is to ask fewer voters whether they approve of each sampled point.
Several techniques exist that are suitable for estimating the number of queries required at each point $x$ to be confident in the value of $l(x)$. However, in this work we simply explore the effect of asking some constant fraction of voters for their evaluation of each sample.

In \autoref{fig:querying_different_num_voters} we decrease the proportion of voters being queried. At each step we sample a random subset of voters with size proportional to the amount shown in the x-axis. This approach provides a linear decrease in the overall cost of identifying a region of consensus. Based on the results shown in \autoref{fig:varying_num_queries} we sample 10,000 points for each experiment shown in \autoref{fig:querying_different_num_voters}.

% \ben{If ambitious I could also include the other figures in an appendix which show by how much the quality of the region decreases.}
However, the results show that this approach very quickly reduces the quality of the underlying region. We hypothesize that this reduction in performance is related to both the relatively low number of voters we use in our experiment ($n = 100$) and the fact that each voter's approved region is sampled uniformly at random. Since the location of one voter's region does not inform the location of any other voter's region, the consensus interval does not generalize well. In contrast, if voter intervals were correlated, we hypothesize that the quality of the interval would reduce at a slower rate with a decreasing fraction of voters sampled.

% skipping even a few voters significantly affects the quality of information that samples provide about the optimal region for all voters.

This hypothesis offers an explanation for why the quality of the optimal region decreases less quickly under the Truncated Normal distribution: As this region is centred around the middle of the 1D interval, when voters approve of larger regions, the optimal region is more likely to be in the center also. This bias towards the center leads to better generalization of the consensus interval.

% \ben{Cite, expand. Chernoff bounds, Wilson score interval, Wald interval, etc. If we make the unfounded assumption that each individual point can be thought of as a Bernoulli.} \carter{Should we get into this in this paper?}
% % https://en.wikipedia.org/wiki/Binomial_proportion_confidence_interval

\begin{figure*}
    \centering
    \includegraphics[keepaspectratio,width=\linewidth]{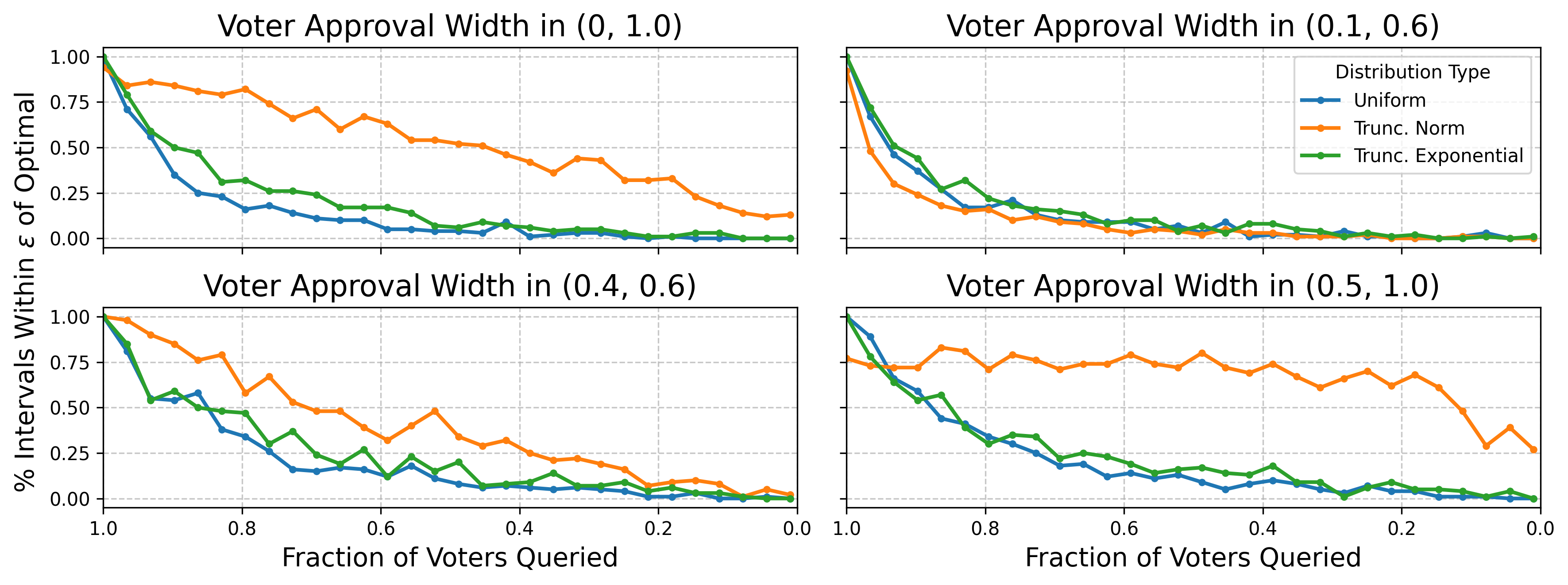}
    \caption{
    The fraction of regions with a score within $\epsilon$ of the optimal region's score as we query a decreasing fraction of voters.
    In all cases $\epsilon = \delta = 0.01$ and we sample 10000 points from the distribution. Each set of parameters is run for 100 randomly initialized trials.
    Quality of the best region found decreases rapidly as fewer voters are queried.
    }
    \label{fig:querying_different_num_voters}
\end{figure*}

% \ben{How does performance change as we decrease the number of voters? Can do this in two ways: (1) uniformly decrease the number of queries per point, (2) query on a point until confident in it's approval score.}

\subsubsection{Querying Voters Intelligently}

Rather than asking a subset of voters about each sampled point, we can instead ask each voter about a subset of sampled points. We can perform a binary search-like procedure to find both endpoints of each voter's approved region.

For each voter $v$, this search proceeds as follows:
\begin{enumerate}
    \item Ask $v$ whether they approve some sampled point until finding a point that they approve. We ask about points at particular fractions through the entire list of ordered samples following the pattern $\{\frac{1}{2}, \frac{1}{4}, \frac{3}{4}, \frac{1}{8}, \frac{2}{8}, ...\}$ (First the median point, then the lower quartile point, then the upper quartile point etc).
    % \item Given some approved sample $p$, query $v$ about the point halfway between $p$ and the leftmost point which may be disapproved (beginning at the left edge). Repeat until all points left of $p$ have a known label.
    \item Given an approved sample point $p$, determine the leftmost extent of $v$'s continuous approval that includes $p$. Let $R_{\text{bound}}$ be the current right boundary of the search (initially $p$) and $L_{\text{bound}}$ be the current left boundary (initially the first sample point in the dataset). Repeatedly query $v$ on the sample point $q$ midway between $L_{\text{bound}}$ and $R_{\text{bound}}$. If $q$ is approved, update $R_{\text{bound}}$ to $q$ (as the approval extends at least this far left). If $q$ is disapproved, update $L_{\text{bound}}$ to be the sample point immediately to the right of $q$. Continue until $L_{\text{bound}}$ and $R_{\text{bound}}$ converge, identifying the left edge of approval and thereby labeling all queried points.
    \item Perform step 2 on the points to the right of $p$.
\end{enumerate}

% Note that when no sampled points overlap with a voter's approved region, we are not able to find the voter's approvals. \ben{This does happen occasionally :/ And needs some explanation :'(} \carter{I would try to do this but I am not sure what to explain}
% \ben{I think just ignoring it is probably fine? I don't think any reviewer is going to read this in so much depth. It's really a relatively minor implementation detail I think}

We show in \autoref{fig:binary_search_cost} the number of sampled points used in the experiment shown in \autoref{fig:varying_num_queries} and the average number of points upon which each voter must be queried in order to find the best region. For example, when we sample $10^5$ points, we typically identify the optimal region using only slightly more than 30 queries per voter. This active approach to querying voters \textit{dramatically} reduces the number of queries required to identify the optimal region. Such an approach is vital for any application of this method to human settings. 

% \ben{Do you think there's any room for this last sentence in the conclusion or elsewhere? It doesn't really fit here but the conclusion doesn't really get into that. It really seems like the binary search method kinda takes this to "practical" rather than "ask a human 100,000 queries."} \carter{I wish we could put this result in the abstract too and intro and conclusion}
% \ben{Yeah, it seems like a pretty strong result actually}

% How would you summarize it? Like "We find that through active learning we can dramatically reduce the number of labels needed on the sampled points to a practical number" or something like that?

% Yea, seems reasonable.

% thinking...

\begin{figure}
    \centering
    \includegraphics[width=\linewidth]{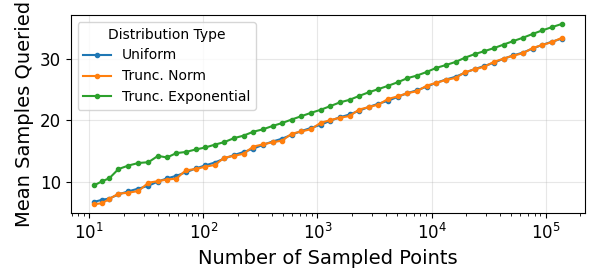}
    \caption{Average number of points required to identify each voter's approved region compared with the total number of sampled points for voters approving a region with width in $0.4, 0.6)$.}
    \label{fig:binary_search_cost}
\end{figure}

\section{Related Work}
\label{sec:related_work}

The problem of identifying consensus or aggregating preferences connects to several research streams.
Qiu \cite{qiu2024representative} presents a ``Representative Social Choice'' model, framing social choice as statistical learning and using VC dimension and Rademacher complexity for generalization bounds. This aligns with our PAC learning approach (using pseudo-dimension). However, our work focuses on finding an optimal 1D consensus interval that maximizes a net agreement score, given known voter intervals and sampled issues, rather than Qiu's axiomatic focus.

Platforms like Polis \cite{small2021polis} use computational techniques like PCA and k-means to identify consensus from user votes. However, its methods are heuristic and descriptive, lacking the formal objective function optimization and provable PAC guarantees central to our work. Our approach formalizes consensus as maximizing $\mathbb{E}[l(x)\indicator{x \in I}]$ and provides an algorithm with theoretical performance bounds.

Halpern et al. 
\cite{halpern2023representation} also addresses challenges in Polis-like systems, specifically incomplete approval votes, by modelling the task as approval-based committee selection. They develop adaptive algorithms to achieve fairness guarantees like Justified Representation. While they also learn from partial information, their goal of committee selection contrasts with our PAC-based optimization of a consensus interval from known voter intervals and sampled issues.

Anshelevich et al. \cite{anshelevich2018approximating} also consider preferences within a metric space, where both voters and alternatives are points, and preferences are determined by proximity. While their use of a metric space for preferences is similar to our 1D issue space, their focus on distortion with ordinal inputs contrasts with our PAC learning approach.

Finally, very related to our work is \cite{elkind2024united}, which studied the problem of consensus formation in metric spaces where coalitions form iteratively around points eliciting wider support. That work, however, abstracts away from the problem of identifying such consensus points, which has been the explicit focus of the work presented here.

\section{Conclusion and Future Work}
\label{sec:conclusion}
This paper introduced a formal framework for passive PAC learning of a consensus interval in a one-dimensional opinion space. We defined an objective function based on aggregated voter approval intervals, presented an efficient ERM algorithm using Kadane's method, and derived sample complexity bounds based on a pseudo-dimension of 2 for the associated function class. These results provide a theoretical basis for identifying regions of maximal approval with provable guarantees.

Future work could pursue several directions:

\textbf{Active Learning}: Developing algorithms for the ``PAC Queries'' setting, where voter labels are acquired strategically to minimize query costs while learning the consensus interval or individual voter intervals.

\textbf{Higher Dimensions}: Extending the model and analysis to find consensus regions in multi-dimensional spaces, which is more directly applicable to complex data like text embeddings. This will involve different hypothesis classes (e.g., axis-aligned rectangles or balls).

\textbf{Alternative Objectives}: Exploring different normative definitions of ``consensus.'' For example, we considered the overlap between voter intervals and the consensus interval as desirable and the overlap between the consensus interval and the complement of voter intervals as a negative. However, one could also argue that a \emph{lack} of overlap between voter intervals and the consensus interval is also undesirable. 
    % And further, a lack of overlap between the consensus statement and the complement of voter intervals could be seen as desirable. 

\textbf{AI Alignment}: Integrating the consensus-finding framework to distill salient regions of acceptable AI expression. Learned consensus intervals could operate as filters or inform the reward mechanisms within RLAIF \cite{lee2023rlaif}.

\textbf{Empirical Validation}: We motivated our paper with applications like online deliberation platforms, but tested our method on synthetic data. However, extending to real-world datasets from platforms like Pol.is is necessary in order to understand how well our algorithms would perform in practice.

The overall goal of this line of research is to identify computationally viable methods for salient consensus elicitation in complex collective decision-making settings.

\bibliographystyle{named}
\bibliography{ijcai25}

\clearpage
\appendix

\section{Proof of Lemma \ref{lemma:pdim_1d} }
\begin{proof}
The pseudo-dimension $Pdim(\mathcal{G})$ is the maximum integer $d$ such that there exists a set of $d$ points $\{x_1, \dots, x_d\}$ that can be pseudo-shattered by $\mathcal{G}$. To pseudo-shatter a set of points means there exist thresholds $r_1, \dots, r_d \in \mathbb{R}$ such that for any binary vector $(b_1, \dots, b_d) \in \{0,1\}^d$, there is a function $g_R \in \mathcal{G}$ (i.e., an interval $R$) for which $\mathbf{1}\{g_R(x_i) > r_i\} = b_i$ for all $i=1, \dots, d$.
If $l(x)$ is identically zero, then $g_R(x)=0$ for all $R, x$. Then $\mathbf{1}\{g_R(x_i)>r_i\} = \mathbf{1}\{0>r_i\}$, which does not depend on $R$. Thus, no points can be shattered, and $Pdim(\mathcal{G})=0$. We now assume $l(x)$ is not identically zero, and thus that some voter has a non-empty approval interval.

\textbf{Part 1:} Show $Pdim(\mathcal{G}) \ge 2$.
We need to show that there exists a set of 2 points that $\mathcal{G}$ can pseudo-shatter. Since $l(x)$ is not identically zero, there exist at least two distinct points $x_1 < x_2$ where $l(x_1) \neq 0$ and $l(x_2) \neq 0$. (If $l(x)$ is non-zero at only one point, $Pdim(\mathcal{G})$ would be 1. The class of intervals allows isolating individual points or pairs, making the existence of two such distinct points $x_1, x_2$ a typical scenario for non-trivial $l(x)$ relevant to this problem).

Let $x_1 < x_2$ be two points such that $l(x_1) \neq 0$ and $l(x_2) \neq 0$. We define thresholds $r_1, r_2$ based on the signs of $l(x_1), l(x_2)$.

Case A: $l(x_1)>0$ and $l(x_2)>0$. Let $r_j = l(x_j)/2$. Then $r_j>0$.
The condition $\mathbf{1}\{l(x_j)\indicator{x_j \in R} > r_j\} = b_j$ becomes $\mathbf{1}\{l(x_j)\indicator{x_j \in R} > l(x_j)/2\} = b_j$.
If $x_j \in R$: $\mathbf{1}\{l(x_j) > l(x_j)/2\} = 1$ (since $l(x_j)>0$).
If $x_j \notin R$: $\mathbf{1}\{0 > l(x_j)/2\} = 0$ (since $l(x_j)/2>0$).
So, $b_j = \indicator{x_j \in R}$.

Case B: $l(x_1)<0$ and $l(x_2)<0$. Let $r_j = l(x_j)/2$. Then $r_j<0$.
The condition $\mathbf{1}\{l(x_j)\indicator{x_j \in R} > r_j\} = b_j$ becomes $\mathbf{1}\{l(x_j)\indicator{x_j \in R} > l(x_j)/2\} = b_j$.
If $x_j \in R$: $\mathbf{1}\{l(x_j) > l(x_j)/2\} = 0$ (since $l(x_j)<0$, e.g., $-2 > -1$ is false).
If $x_j \notin R$: $\mathbf{1}\{0 > l(x_j)/2\} = 1$ (since $l(x_j)/2<0$).
So, $b_j = \indicator{x_j \notin R} = 1 - \indicator{x_j \in R}$.

Case C: $l(x_1)>0$ and $l(x_2)<0$. Let $r_1=l(x_1)/2$ and $r_2=l(x_2)/2$.
Then $b_1 = \indicator{x_1 \in R}$ and $b_2 = \indicator{x_2 \notin R}$.

In all cases (A, B, C, and C with signs swapped), the pair $(b_1,b_2)$ is determined by $(\indicator{x_1 \in R}, \indicator{x_2 \in R})$ or simple transformations thereof. The class of indicator functions $\{\indicator{x \in R} : R \text{ is an interval}\}$ has VC-dimension 2, meaning it can shatter a set of two points $\{x_1, x_2\}$. That is, all 4 patterns for $(\indicator{x_1 \in R}, \indicator{x_2 \in R})$ are achievable by varying $R$:
\begin{itemize}
    \item $(\indicator{x_1 \in R}, \indicator{x_2 \in R}) = (1,1)$ by $R=[x_1,x_2]$.
    \item $(\indicator{x_1 \in R}, \indicator{x_2 \in R}) = (0,0)$ by $R=\emptyset$. % Or an interval not containing x1 or x2
    \item $(\indicator{x_1 \in R}, \indicator{x_2 \in R}) = (1,0)$ by $R=[x_1,x_1]$.
    \item $(\indicator{x_1 \in R}, \indicator{x_2 \in R}) = (0,1)$ by $R=[x_2,x_2]$.
\end{itemize}
In Case A, $(b_1,b_2)$ directly achieves these 4 patterns.
In Case B, if $(\indicator{x_1 \in R}, \indicator{x_2 \in R})$ takes values $(1,1), (0,0), (1,0), (0,1)$, then $(b_1,b_2)$ takes values $(0,0), (1,1), (0,1), (1,0)$ respectively. All 4 patterns are achieved.
In Case C, if $(\indicator{x_1 \in R}, \indicator{x_2 \in R})$ takes values $(1,1), (0,0), (1,0), (0,1)$, then $(b_1,b_2) = (\indicator{x_1 \in R}, 1-\indicator{x_2 \in R})$ takes values $(1,0), (0,1), (1,1), (0,0)$ respectively. All 4 patterns are achieved.
Thus, under the condition that $l(x)$ is non-zero at two distinct points $x_1, x_2$, we can find thresholds $r_1,r_2$ such that all 4 binary patterns for $(b_1,b_2)$ are generated. Therefore, $Pdim(\mathcal{G}) \ge 2$.

\textbf{Part 2:} Show $Pdim(\mathcal{G}) \le 2$.

We must show that no set of 3 points can be pseudo-shattered by $\mathcal{G}$.
Let $x_1 < x_2 < x_3$ be any three distinct points in $\mathbb{R}$. Let $r_1, r_2, r_3$ be any three real-valued thresholds.
Let $b_j(R) = \mathbf{1}\{g_R(x_j) > r_j\} = \mathbf{1}\{l(x_j)\indicator{x_j \in R} > r_j\}$.
For each point $x_j$, $l(x_j)$ is a fixed real number and $r_j$ is a fixed threshold. We define two values for each point $x_j$:
$s_j^{(1)} = \mathbf{1}\{l(x_j) > r_j\}$ (this is the outcome $b_j(R)$ if $x_j \in R$)
$s_j^{(0)} = \mathbf{1}\{0 > r_j\}$ (this is the outcome $b_j(R)$ if $x_j \notin R$)
Thus, for any interval $R$, the generated binary vector $(b_1(R), b_2(R), b_3(R))$ is formed as $(s_1^{(\indicator{x_1 \in R})}, s_2^{(\indicator{x_2 \in R})}, s_3^{(\indicator{x_3 \in R})})$.

Let $f_R(x_j) = \indicator{x_j \in R}$. The vector of indicators $(f_R(x_1), f_R(x_2), f_R(x_3))$ represents which of the three points are included in the interval $R$.
It is a standard result that the class of 1D intervals (a Vapnik-Chervonenkis class) has VC-dimension 2. This means it cannot shatter any set of 3 points. Specifically, for $x_1 < x_2 < x_3$, the indicator pattern $(f_R(x_1)=1, f_R(x_2)=0, f_R(x_3)=1)$ cannot be generated by any interval $R$. If $x_1 \in R$ and $x_3 \in R$ for an interval $R$, then all points between $x_1$ and $x_3$ must also be in $R$, meaning $x_2 \in R$. Thus, $f_R(x_2)$ must be 1 if $f_R(x_1)=1$ and $f_R(x_3)=1$.

Let $P_f = \{ (f_R(x_1), f_R(x_2), f_R(x_3)) : R \text{ is an interval} \}$ be the set of all possible indicator patterns generated on $x_1, x_2, x_3$ by intervals. Since the pattern $(1,0,1)$ is not in $P_f$, the size of $P_f$ is $|P_f| \le 2^3 - 1 = 7$.
The set of output vectors $V = \{ (b_1(R), b_2(R), b_3(R)) : R \text{ is an interval} \}$ is obtained by applying the fixed mapping (defined by $s_j^{(0)}, s_j^{(1)}$ for each $j$) to each vector in $P_f$. That is, $V = \{ (s_1^{(v_1)}, s_2^{(v_2)}, s_3^{(v_3)}) : (v_1,v_2,v_3) \in P_f \}$.
Since the mapping is from $P_f$ to $V$, it must be that $|V| \le |P_f|$.
Therefore, $|V| \le 7$. As $V$ contains at most 7 distinct vectors, it cannot be equal to the set of all $2^3=8$ binary vectors $\{0,1\}^3$.
This means that for any set of 3 points $x_1 < x_2 < x_3$ and any set of thresholds $r_1, r_2, r_3$, there is at least one binary pattern in $\{0,1\}^3$ that cannot be generated by $\mathcal{G}$.
Thus, no set of 3 points can be pseudo-shattered by $\mathcal{G}$, which implies $Pdim(\mathcal{G}) < 3$, and therefore $Pdim(\mathcal{G}) \le 2$.

\textbf{Conclusion:}
Since $Pdim(\mathcal{G}) \ge 2$ and $Pdim(\mathcal{G}) \le 2$, we conclude that $Pdim(\mathcal{G}) = 2$, provided $l(x)$ is not identically zero (specifically, non-zero on at least two points where the interval class can distinguish them as required for the $Pdim \ge 2$ construction).
\end{proof}

\section{Proof of Theorem \ref{thm:sample_complexity_formal}}
\begin{proof}

To derive the sample complexity bound, we first note that the functions $g_R(x) \in \mathcal{G}$ are bounded by $M_0 = \max_x |l(x)| = n$, since $l(x)$ takes integer values in $\{-n, \dots, n\}$. The pseudo-dimension of $\mathcal{G}$, $d_{PD} = Pdim(\mathcal{G})$, is 2 as established in Lemma \ref{lemma:pdim_1d}.

For a class of functions $\mathcal{G}$ whose members $g$ are bounded such that $|g(x)| \le M_0$, the probability that the empirical mean $\hat{E}_S[g]$ deviates from the true mean $E[g]$ by more than $\epsilon$ for any $g \in \mathcal{G}$ is bounded as:
\begin{align*}
 &P\left(\sup_{g \in \mathcal{G}} |E[g] - \hat{E}_S[g]| > \epsilon\right) \\ &\le 4 \mathcal{N}_1(\epsilon/8, \mathcal{G}, 2m) \exp\left(\frac{-m\epsilon^2}{32M_0^2}\right)   
\end{align*}
 
Here, $\mathcal{N}_1(\eta, \mathcal{G}, 2m)$ is the $L_1$-covering number of the class $\mathcal{G}$ at scale $\eta$ for $2m$ points.

To utilize pseudo-dimension bounds for covering numbers, which are typically stated for functions mapping to $[0,1]$, we introduce a scaled function class $\mathcal{G}'$. Let $g \in \mathcal{G}$. Define $g' = \frac{g+M_0}{2M_0}$. These functions $g' \in \mathcal{G}'$ map to $[0,1]$. The pseudo-dimension of $\mathcal{G}'$ is the same as that of $\mathcal{G}$, i.e., $Pdim(\mathcal{G}') = d_{PD} = 2$, because pseudo-dimension is invariant under positive affine transformations of the function outputs.
Anthony and Bartlett (2009, Theorem 18.4) provide a bound for the $L_1$-covering number of a class of functions mapping to $[0,1]$ in terms of its pseudo-dimension $d_{PD}$:
$$ \mathcal{N}_1(\eta, \mathcal{G}', N) \le e(d_{PD}+1)\left(\frac{2e}{\eta}\right)^{d_{PD}} $$
for $N$ points.

We need to relate $\mathcal{N}_1(\epsilon/8, \mathcal{G}, 2m)$ to a covering number for $\mathcal{G}'$. The $L_1$-distance between two functions $g_1, g_2 \in \mathcal{G}$ relates to their scaled versions $g_1', g_2' \in \mathcal{G}'$ by $E_S[|g_1 - g_2|] = E_S[| (2M_0 g_1' - M_0) - (2M_0 g_2' - M_0) |] = 2M_0 E_S[|g_1' - g_2'|]$.
Thus, an $\eta'$-cover for $\mathcal{G}'$ corresponds to a $2M_0\eta'$-cover for $\mathcal{G}$. This implies $\mathcal{N}_1(\epsilon_0, \mathcal{G}, 2m) = \mathcal{N}_1(\epsilon_0/(2M_0), \mathcal{G}', 2m)$.
For $\epsilon_0 = \epsilon/8$, we have:
\begin{align*}
\mathcal{N}_1(\epsilon/8, \mathcal{G}, 2m) &= \mathcal{N}_1\left(\frac{\epsilon/8}{2M_0}, \mathcal{G}', 2m\right)\\ &= \mathcal{N}_1\left(\frac{\epsilon}{16M_0}, \mathcal{G}', 2m\right)
\end{align*}

Let $\eta' = \frac{\epsilon}{16M_0}$. Using the covering number bound for $\mathcal{G}'$ with $N=2m$ points:
 
\begin{align*}
\mathcal{N}_1\left(\frac{\epsilon}{16M_0}, \mathcal{G}', 2m\right)  &\le e(d_{PD}+1)\left(\frac{2e}{\epsilon/(16M_0)}\right)^{d_{PD}}\\ 
&= e(d_{PD}+1)\left(\frac{32eM_0}{\epsilon}\right)^{d_{PD}}
\end{align*}

Substituting this into the uniform convergence probability bound:
\begin{align*}
&P\left(\sup_{g \in \mathcal{G}} |E[g] - \hat{E}_S[g]| > \epsilon\right) \\
&\le 4 e(d_{PD}+1)\left(\frac{32eM_0}{\epsilon}\right)^{d_{PD}} \exp\left(\frac{-m\epsilon^2}{32M_0^2}\right)    
\end{align*}

We want this probability to be at most $\delta$. So we set:
$$ 4 e(d_{PD}+1)\left(\frac{32eM_0}{\epsilon}\right)^{d_{PD}} \exp\left(\frac{-m\epsilon^2}{32M_0^2}\right) \le \delta $$
To solve for $m$, we can rearrange and take the natural logarithm:
$$ \exp\left(\frac{m\epsilon^2}{32M_0^2}\right) \ge \frac{4 e(d_{PD}+1)\left(\frac{32eM_0}{\epsilon}\right)^{d_{PD}}}{\delta} $$
$$ \frac{m\epsilon^2}{32M_0^2} \ge \ln\left(4 e(d_{PD}+1)\left(\frac{32eM_0}{\epsilon}\right)^{d_{PD}} \cdot \frac{1}{\delta}\right) $$
$$ \frac{m\epsilon^2}{32M_0^2} \ge \ln(4e(d_{PD}+1)) + d_{PD}\ln\left(\frac{32eM_0}{\epsilon}\right) + \ln\left(\frac{1}{\delta}\right) $$
Finally, solving for $m$:
$$ m \ge \frac{32M_0^2}{\epsilon^2} \left( d_{PD} \ln\left(\frac{32eM_0}{\epsilon}\right) + \ln(4e(d_{PD}+1)) + \ln\left(\frac{1}{\delta}\right) \right) $$.
\end{proof}

\end{document}